\def\showOutline{1}
\newcommand{\outline}[1]{
  \if\showOutline 1
    \textbf{[outline]} \textit{#1}
  \fi
}
\newcommand{\TODO}[1]{
  \if\showOutline 1
    \textit{\color{blue} (TODO: #1)}
  \fi
}
\theoremstyle{plain}
\newtheorem{lemma}{Lemma}
\newtheorem{axiom}{Axiom}
\newtheorem{theorem}{Theorem}
\newtheorem{defn}{Definition}
\newcommand{\infl}{\chi}
\newcommand{\pderiv}[2]{\frac{\partial #1}{\partial #2}}
\newcommand{\pderivat}[3]{\left. \pderiv{#1}{#2}\right\rvert_{#3}}
\renewcommand{\vec}[1]{\mathbf{#1}}
\newcommand{\x}{\vec{x}}
\newcommand{\X}{\mathcal{X}}
\newcommand{\R}{\mathbb{R}}
\newcommand{\z}{\vec{z}}
\newcommand{\Z}{\mathcal{Z}}
\newcommand{\influence}{distributional influence\xspace}
\newcommand{\Influence}{Distributional influence\xspace}
\newcommand{\axla}{linearity agreement\xspace}
\newcommand{\cut}[1]{ }
\title{Influence-Directed Explanations \\ for Deep Convolutional Networks}
\author{
  \IEEEauthorblockN{
    Klas Leino\\
    Shayak Sen\\
    Anupam Datta\\
    and Matt Fredrikson
  }
  \IEEEauthorblockA{Carnegie Mellon University}
  \and
  \IEEEauthorblockN{Linyi Li}
  \IEEEauthorblockA{Tsinghua University}
}
\begin{document}
\pgfplotsset{
% override style for non-boxed plots
    % which is the case for both sub-plots
    every non boxed x axis/.style={}
}
% \nipsfinalcopy is no longer used

\maketitle

\begin{abstract}
  We study the problem of explaining a rich class of behavioral properties of deep neural networks.
Distinctively, our \emph{influence-directed explanations} approach this problem by peering inside the network to identify neurons with high \emph{influence} on a quantity and distribution of interest, using an axiomatically-justified influence measure, and then providing an \emph{interpretation} for the concepts these neurons represent.
We evaluate our approach by demonstrating a number of its unique capabilities on convolutional neural networks trained on ImageNet.
Our evaluation demonstrates that influence-directed explanations
\emph{(1)} identify influential concepts that generalize across instances,
\emph{(2)} can be used to extract the ``essence'' of what the network learned about a class, 
and \emph{(3)} isolate individual features the network uses to make decisions and distinguish related classes.

\end{abstract}

\section{Introduction}
\label{sec:intro}
%!TEX root=./nips_2018_submission.tex
We study the problem of explaining a class of behavioral properties of deep neural networks, with a focus on convolutional neural networks. This problem has received significant attention in recent years with the rise of deep networks and associated concerns about their opacity~\cite{ai-black-box}.
%Explanations that provide insight into the reasons behind network behavior play an important role in mitigating this opacity.

A growing body of work on explaining deep convolutional network behavior is based on mapping models' prediction outputs back to relevant regions in an input image. This is accomplished in various ways, such as by visualizing gradients~\cite{saliency,integratedGrads,bach-plos15}, or by backpropagation~\cite{deconv,backprop,bach-plos15}. 
%Recently, \cite{lime} proposed fitting simpler interpretable models around a test point to predict relevant input regions. 
An appealing feature of these approaches is that they capture \textit{input influence}. However, because these approaches relate instance-specific features to instance-specific predictions, the explanations that they produce do not generalize beyond a single test point (see Section~\ref{sec:exp:effectiveness}, Figure~\ref{fig:meanvindividual}).

An orthogonal approach is to visualize the features learned by networks by identifying input instances that maximally activate an internal neuron, by either optimizing the activation in the input space~\cite{saliency,MahendranV14,NguyenDYBC16}, or searching for instances in a dataset~\cite{girshick14rich}. Importantly, this type of explanation gives insight into the higher-level concepts learnt by the network, and naturally generalizes across instances and classes. However, this approach does not relate these higher-level concepts to the predictions that they cause. Indeed, examining activations alone is not sufficient to do so (see Section~\ref{exp:model_comp}).

This paper introduces \emph{influence-directed explanations} for deep networks to combine the positive attributes of these two lines of work. Our approach peers inside the network to identify neurons with high \emph{influence} on the model's behavior, and then uses existing techniques (e.g., visualization) to provide an \emph{interpretation} for the concepts they represent. We introduce a novel \emph{distributional influence} measure that allows us to identify which neurons are most influential in determining the model's behavior on a given distribution of instances. From this we are able to identify the learned concepts that cause the network to behave characteristically, for example, on the distribution of instances that share a particular label.

\begin{figure}[t]
\centering
% \begin{minipage}{0.5\textwidth}
\begin{subfigure}{0.25\textwidth}
\centering
\begin{tabular}{@{\hskip 0pt}c@{\hskip 1pt}c@{\hskip 1pt}c@{\hskip 0pt}}
\includegraphics[width=0.3\textwidth]{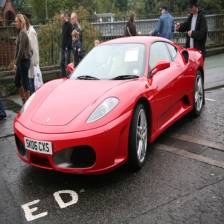}
&
\includegraphics[width=0.3\textwidth]{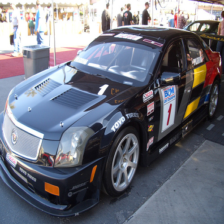}
&
\includegraphics[width=0.3\textwidth]{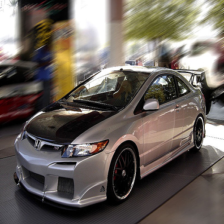}
\\[-0.6ex]
\includegraphics[width=0.3\textwidth]{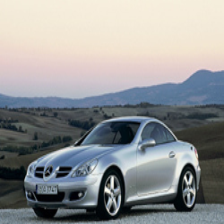}
&
\includegraphics[width=0.3\textwidth]{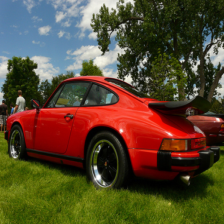}
&
\includegraphics[width=0.3\textwidth]{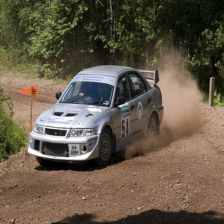}
\\[-0.6ex]
\includegraphics[width=0.3\textwidth]{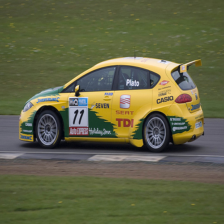}
&
\includegraphics[width=0.3\textwidth]{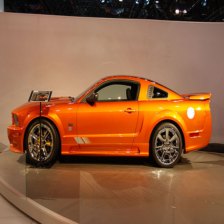}
&
\includegraphics[width=0.3\textwidth]{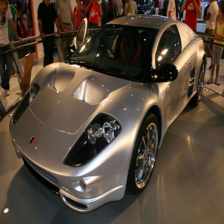}
\\
\end{tabular}
\caption{ }
\label{fig:concepts_orig}
\end{subfigure}%
%\quad\quad\quad\quad\quad\quad
\begin{subfigure}{0.25\textwidth}
\centering
\begin{tabular}{@{\hskip 0pt}c@{\hskip 1pt}c@{\hskip 1pt}c@{\hskip 0pt}}
\includegraphics[width=0.3\textwidth]{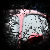}
&
\includegraphics[width=0.3\textwidth]{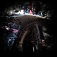}
&
\includegraphics[width=0.3\textwidth]{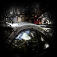}
\\[-0.6ex]
\includegraphics[width=0.3\textwidth]{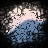}
&
\includegraphics[width=0.3\textwidth]{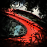}
&
\includegraphics[width=0.3\textwidth]{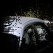}
\\[-0.6ex]
\includegraphics[width=0.3\textwidth]{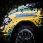}
&
\includegraphics[width=0.3\textwidth]{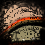}
&
\includegraphics[width=0.3\textwidth]{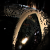}
\\
\end{tabular}
\caption{ }
\label{fig:concepts_expl}
\end{subfigure}
% \end{minipage}
\caption{\textbf{(a)} Images of cars labeled `sports car' by the VGG16 network, and \textbf{(b)} receptive fields of the most influential feature map on a comparative quantity that characterizes the model's tendency to predict `sports car' over `convertible.'}
%In most cases, the features within the field contain the top of the car, which is the key distinctive concept between these classes.
%}
\label{fig:concepts}
\end{figure}

Figure~\ref{fig:concepts} demonstrates the capability of influence-directed explanations to extract meaningful insight about the network's inner workings. We measure the influence of feature maps at the \verb'conv4_1' layer on the network's tendency to predict `sports car' over `convertible.' The images in Figure~\ref{fig:concepts_expl} are computed by rendering the receptive field of the \emph{most} influential map in the original feature space for the corresponding image in Figure~\ref{fig:concepts_orig}. The results coincide with an intuitive understanding of the distinction between these classes: in most instances, the depicted interpretation highlights the portion of the image depicting the car's top.

Our empirical evaluation demonstrates that influence-directed explanations \emph{(1)} extract influential concepts that generalize across instances, whereas those computed using input influence fail to do so (Section~\ref{sec:exp:effectiveness}), 
\emph{(2)} reveal the ``essence'' of how the network views a class and distinguishes it from others (Section~\ref{exp:model_comp}),
and \emph{(3)} isolate high-level features that the network uses to make predictions (Section~\ref{sec:individual-slice}, \ref{sec:comparative}).
In each case, our influence-directed explanations leverage the ability to measure internal influence to produce useful explanations that would not have been possible otherwise.

\section{Influence}
\label{sec:influence}
%!TEX root=./nips_2018_submission.tex

In this section, we propose \emph{\influence}, an axiomatically-justified family of influence measures.
\Influence is parameterized by  a \emph{slice} $s$ of the network (e.g. a particular layer), a \emph{quantity of interest} $f$ and a \emph{distribution of interest} $P$.
Given these elements, we measure influence as the partial derivative of $f$ at the slice $s$ averaged over $P$.  
We describe the measure and its parameters in more detail below. 
In Section~\ref{sec:axioms}, we justify this family of measures by proving that these are the only measures that satisfy 
some natural properties.

The slice parameter exposes the internals of a network, allowing us to measure influence with respect to intermediate neurons.
This is a significant departure from prior work, and is key to our goal of identifying high-level concepts that are learned by a network.
As we show in Section~\ref{sec:experiments}, influence measurements on internal units lead to explanations that generalize across instances.
This is usually not possible by measuring input features (i.e., pixels), as learned concepts can manifest themselves in many different ways in the input space, with a high degree of variance among the influence of particular input features across instances.
% Importantly, as internal neurons can represent higher-level concepts (as opposed to input pixels), influential internal neurons allow explanations to be more general rather than being specific to single instances.
% In Section~\ref{sec:related}, we compare this measure to others used in prior work.

The distribution and quantity of interest together capture aspects of network behavior that we are interested in explaining.  
Examples of distributions of interest are: \emph{(1)} a single instance (i.e., the influence measure just reduces to the gradient at the point); \emph{(2)} the distribution of `cat' images, or \emph{(3)} the distribution of all images in a dataset. 
While the first distribution of interest focuses on why a single instance was classified a particular way, the second explains the ``essence'' of a class, and the third identifies generally-influential neurons over the entire population. 
Another example is the uniform distribution on the line segment of scaled instances between an instance and a baseline, which yields a measure similar to one called Integrated Gradients~\cite{integratedGrads}.

Whereas the distribution of interest identifies the subjects of an explanation, the quantity of interest identifies the question that is being addressed.
For example, the quantity of interest may correspond to the network's outcome for the `cat' class, or its \emph{comparative} outcome towards `cat' versus `dog' (i.e., the difference in the network scores for cat and dog classes).  
The first quantity addresses the question of why a particular input is classified as `cat', whereas the second addresses how the network distinguishes `cat' instances from `dog' instances.

We represent quantities of interest of networks as continuous and
differentiable functions $f : \X \to \R$ where $\X \subseteq \R^n$ and
$n$ is the number of inputs to $f$. A \influence measure, denoted by
$\infl_i(f, P)$, measures the influence of input $i$ for a quantity of
interest $f$, and a distribution of interest $P$ where $P$ is a distribution
over $\X$.

Next, we define a slice of a network. A particular layer in the network can be
viewed as a slice. More generally, a slice is any partitioning of the network
into two parts that exposes its internals.  Formally, a slice $s$ of a network
$f$ is a tuple of functions $\langle g, h\rangle$ such that $h : \X \to \Z$, $g : \Z \to \R$ and $f = g \circ h$.  
The internal representation for an
instance $\vec x$ is given by $\vec{z} = h(\vec{x})$. In our setting, elements
of $\vec z$ can be viewed as the activations of neurons at a particular layer.

\begin{defn}
\label{defn:infl-slice}
The influence of an element $j$ in the internal representation defined by $s =
\langle g, h\rangle$ is
%given by

\begin{equation}
\label{eq:infl-slice}
\infl_j^s(f,P) = \int_{\X}\pderivat{g}{z_j}{h(\x)}P(\x)d\x
\end{equation}
\end{defn}

\section{Identifying Influential Concepts}
\label{sec:experiments}
%!TEX root=./nips_2018_submission.tex

% In this section, we present a series of uses cases that highlight the utility of influence-directed explanations. We first demonstrate that by measuring the influence of intermediate neurons in a network, we can isolate and understand the concepts learned by the network and used to distinguish classes. We then show that this general information is useful for explaining a network's classification behavior on individual instances as well, making the relevant high-level concepts readily apparent in the context of a particular input.

% We treat explanations as a means to answer specific queries about a model's behavior, and we show that the flexibility offered by our framework provides the tools necessary to confirm or refute specific hypotheses about a model's behavior.

% \subsection{Identifying Influential Concepts}
% \label{sec:dist-interest}

\begin{figure}[t]
\centering
\begin{minipage}{.5\textwidth}
\centering
\input{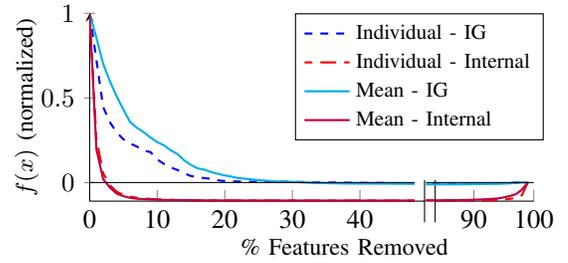}
\end{minipage}%
\caption{
Plot of the decrease in the function value, $f(x)$, as features are removed from the input (\textbf{IG}) or first fully-connected layer (\textbf{Int}) in order of influence, using the VGG16~\cite{Simonyan14c} network. The vertical axis was normalized so that the average value of $f(x)$ is 1, and the average value of $f(0)$ is 0.
The dashed curves depict the average quantity when influence is measured for each instance individually, and the solid curves when the mean influence over the respective class is used for each instance.
Plots are averaged across 5 randomly-selected ImageNet classes.
% Plot of the decrease in the function value (before the softmax), $f(x)$, as features are removed in order of influence.
% The model is based on LeNet~\citep{LeCun98} trained on the MNIST dataset.
% In each case, the most influential feature or hidden unit is incrementally removed, and the resulting value of $f(x)$ is depicted on the vertical axis. The vertical axis was normalized so that the average value of $f(x)$ on class ``3'' is 1, and 0 is the value of $f(0)$.
% The dashed curves depict the quantity when influence is measured for each instance individually (averaged across all instances of class ``3''), and the solid curves when influence is with the empirical distribution of the entire class as the distribution of interest. Plots are shown for both integrated gradients (\textbf{IG}), as well as for internal influence (\textbf{Int}) for a slice at the first fully-connected layer of the network.
}
\label{fig:meanvindividual}
\end{figure}

% TODO: rewrite this!
The influence measure defined in Section~\ref{sec:influence} is parameterized by a distribution of interest $P$ (Equation~\ref{eq:infl-slice}) over which the measure is taken. 
By selecting $P$ to be a point mass, the resulting measurements characterize the importance of features for the model’s behavior on a single instance. 
Defining the distribution of interest with support over a larger set of instances yields explanations that capture the factors common to network behaviors across the corresponding population of instances. 
In this section, we demonstrate that when taken at a high internal layer, distributional influence identifies concepts that generalize well across instances. 
Furthermore, we show this measure often lets us identify a relatively small set of concepts that characterize the ``essence'' of the class, and are sufficient for distinguishing instances of that class from others.

\subsection{Effectiveness of Internal Influence}
\label{sec:exp:effectiveness}

One of our central claims is that the ability to measure internal influence across an appropriately chosen distribution lets us identify learned concepts that are relevant to classification predictions.
Figure~\ref{fig:meanvindividual} quantifies the degree to which internal units identified using internal influence measurements correspond to relevant general concepts, compared against the influence measurements obtained using integrated gradients (IG)~\cite{integratedGrads}.
The curves report the network's output at the coordinate of the predicted class, normalized to begin at 1, as input features (IG) or internal units at the lowest fully-connected layer are ``turned off'' in decreasing order of influence.
We adapted this approach from Samek et al.~\cite{Samek2017} for internal units by setting their activation to 0. 
The vertical axis depicts the dropoff of the network’s output against the percentage of features that have been removed.

We evaluated this measure on instances of five randomly-selected ImageNet classes on VGG16~\cite{Simonyan14c}, and display the averaged results. 
We selected integrated gradients as our point of comparison because we found that it outperformed comparable methods discussed in the related work. 
Influence is calculated in two ways to characterize the difference between instance-specific and general measurements. 
In the cases labeled ``Individual'', we measure influence for each instance individually and rank features and units accordingly, whereas those labeled ``Mean'' rank features and units by influence measured over the distribution of instances in the appropriate class.

Comparing the individual and mean results tells us how well the components identified as relevant by the influence measurements generalize across the class. 
If the individual cases significantly outperform their respective mean cases, then we might conclude that the distributional influences failed to identify concepts that are relevant across the class. 
The results in Figure~\ref{fig:meanvindividual} show a very small gap in performance between the individual and mean cases for internal influence, but, unsurprisingly, this was not the case for input influence. 
This suggests that units deemed relevant to the class on-average also tend to contribute consistently across instances in that class. 
% Integrated gradients~\cite{integratedGrads}, along with other influence measures defined by prior work (discussed in Section \ref{sec:related}), operates on the input features, which unsurprisingly do not generalize as well across instances. 
Moreover, the steeper dropoff for the internal influence measurements indicates that the identified units correspond to highly-relevant concepts with a greater degree of class-specificity.
%important, class-specific concepts that are relevant to the classification outcome.

\subsection{Validating the ``Essence'' of a Class}
\label{exp:model_comp}

The steep dropoff for internal influences in Figure~\ref{fig:meanvindividual} suggests that it is often the case that relatively few units are highly influential towards a particular class. 
Combined with the fact that these units tend to be relevant \emph{across} the class suggests the existence of a consistent, relatively small set of units that are sufficient to predict and explain the class.
We refer to this set as the ``essence'' of the class, and validate our hypothesis by isolating these units from the rest of the model to extract a binary classifier for membership in the corresponding class.

We show that these classifiers, which we call \emph{experts}, are often more proficient than the original model at distinguishing 
instances of the class from other classes in the distribution, despite comprising fewer units than the original model.
% To this end, we introduce a technique for compressing models using influence measurements to yield class-specific ``expert'' models that demonstrate the essence of that class learned by the model. 
% Moreover, we show that experts can match the original model's performance with very few neurons, and that unit activations (rather than influence) are less effective at isolating the ``essence'' of a class.
Furthermore, the performance of the original model can be achieved by experts using as few as 1\% of the available internal units.
Finally, we show that experts derived by using activation levels rather than influence measurements to identify the ``essence'' are not as effective for a fixed number of units, demonstrating that explanations based on activations are not as effective at identifying and isolating learned concepts.

\subsubsection{Class-specific experts}
Given a model $f$ with softmax output, and slice $\langle g, h\rangle$ where $g : \mathcal{Z}\to\mathcal{Y}$, let $M_h \in \mathcal{Z}$ be a 0-1 vector. 
Intuitively, $M_h$ masks the set of units at layer $h$ that we wish to retain, and thus is $1$ at all locations corresponding to such units and $0$ everywhere else. 
Then the \emph{slice compression} $f_{M_h}(X) = g(h(X) * M_h)$ corresponds to the original model after discarding all units at $h$ not selected by $M_h$. 
Given a model $f$, we obtain a binary classifier, $f^{i}$, for class $L_i$ (corresponding to output $i$) by taking the argmax over outputs and combining all classes $j \neq i$ into one class, $\neg i$; i.e., $f^i$ predicts $i$ when $f$ predicts $i$, and $\neg i$ when $f$ predicts $j \neq i$. 

\begin{figure}[t]
  \centering
  \begin{subfigure}{0.5\textwidth}
    \centering
    \includegraphics[height=.55\textwidth]{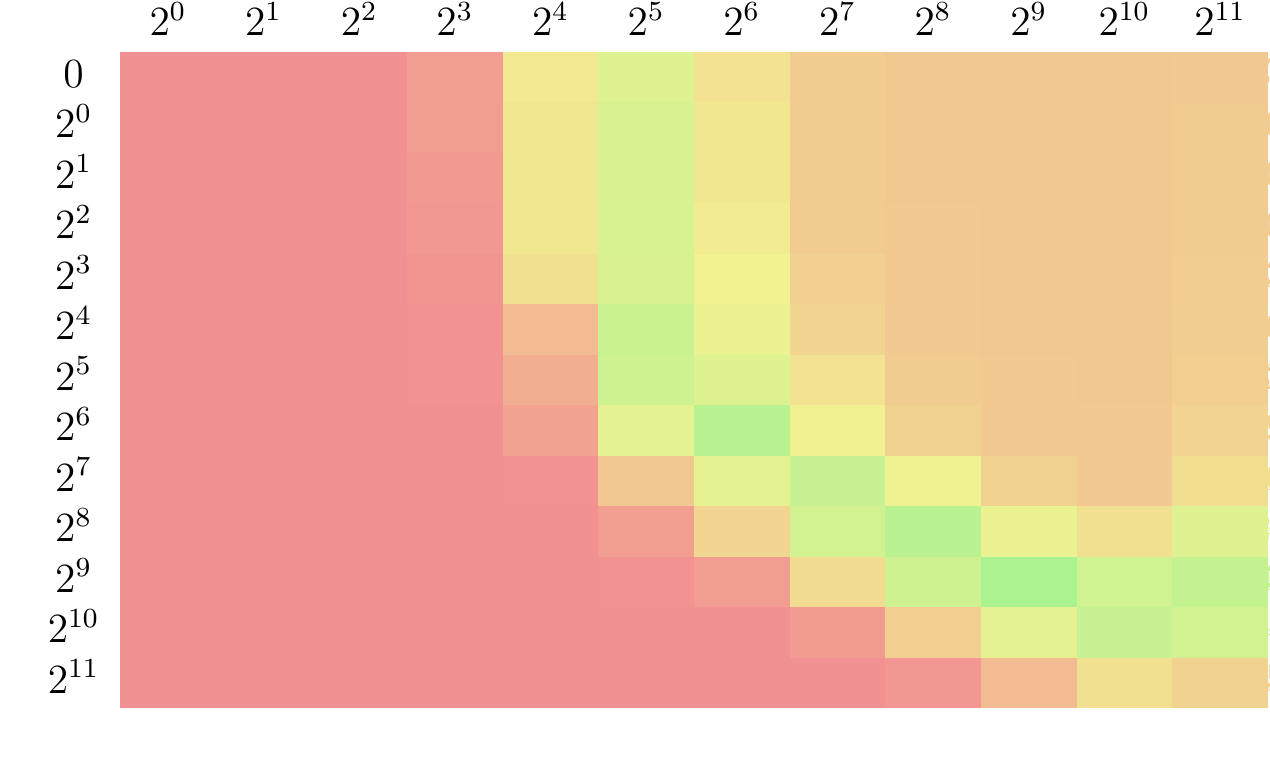}
    \caption{ }
    \label{fig:comp-space}
  \end{subfigure}

  \begin{subfigure}{0.5\textwidth}
    \centering
    \begin{tabular}{l|c|c|c}\hline
      \textbf{Class} & \textbf{Orig.} & \textbf{Infl.} & \textbf{Act.} \\\hline
      Chainsaw (491)   &  .14 & .71 & .21 \\\hline
      Bonnet (452)     &  .62 & .92 & .77 \\\hline
      Park Bench (703) &  .52 & .71 & .63 \\\hline
      Sloth Bear (297) &  .36 & .75 & .44 \\\hline
      Pelican (144)    &  .65 & .95 & .79 \\\hline
    \end{tabular}
    \caption{ }
    \label{fig:experts_table}
  \end{subfigure}
  \caption{
    \textbf{(a)} F${}_1$ score for experts derived from the first fully-connected layer of the VGG16 network on a randomly-selected ImageNet class. The rows and columns correspond to $\beta$ and $\alpha$ respectively. The layer contains 4096 neurons, so the bottom right corner corresponds to the entire network. High F${}_1$ scores are shown in green, and low scores in red. \textbf{(b)} Model compression recall for five randomly-selected ImageNet classes. Columns marked Orig. correspond to the original model, Infl. to experts computed using influence measures, and Act. to experts computed using activation levels. Precision in all cases was 1.0.
  }
\end{figure}

A class-wise \emph{expert} for $L_i$ is a slice compression $f_{M_h}$ whose corresponding binary classifier $f_{M_h}^{i}$ achieves better recall on $L_i$ than the binary classifier $f^{i}$, while achieving comparable or better precision. 
% We demonstrate that the influence measurements taken at slice $\langle g,h\rangle$ over a distribution of interest, $P_i$, conditioned on class $L_i$ yields an efficient heuristic for extracting experts from large networks.
To derive an expert, we compute $M_h$ by measuring the slice influence (Equation~\ref{eq:infl-slice}) over $P_i$ using the quantity of interest $g |_{i}$. 
We then select $\alpha$ units at layer $h$ with the greatest positive influence, and $\beta$ units with the lowest negative influence (i.e., greatest magnitude among those with negative influence). 
$M_h$ is then defined to be zero at all positions except those corresponding to these $\alpha+\beta$ units. 
In our experiments, we obtain concrete values for $\alpha$ and $\beta$ by a parameter sweep, ultimately selecting parameter values that yield the best experts by recall rate. 

Figure~\ref{fig:comp-space} shows the F${}_1$ score obtained on a randomly-selected class as a function of $\alpha$ and $\beta$. 
Figure~\ref{fig:experts_table} shows the recall of experts found in this way for five randomly selected ImageNet classes.
Notably, the $\alpha$ and $\beta$ yielding the best performance correspond to less than a quarter of the units available, and the resulting expert achieves significantly better performance than the original model. 
Additionally, the performance of the original model can be matched using a tiny fraction of the available neurons (as few as 1\%), supporting the claim that the network's behavior on the class can be effectively summarized by identifying a small number of the most influential units for that class. 
% In other words, measuring internal influence is an effective way to identify the concepts embodied in a particular layer that the network learned in order to discriminate classes from one another.

\subsubsection{Inadequacy of activation levels}
Some recent prior work~\cite{oramas, deconv} uses unit activation levels to determine relevance when identifying concepts.
% We consider the use of unit activation levels as an alternative way to determine relevance, as some related works \cite{oramas, deconv} use this approach. 
Here we consider an alternative approach for deriving experts by measuring the average activation at $h$ across the distribution of interest to compute $M_h$, and ranking units by average activation level.
Figure~\ref{fig:experts_table} shows the best recall of the resulting activation-based experts, and we see that activations are considerably less effective than influences for finding good experts. 
Moreover, experts derived from activations are unable to match the original model performance without using \emph{at least half} of the available units, and those with small $\alpha, \beta$ (close to 1\%) achieve zero recall in every case we evaluated. 
% This suggests that activation levels are not a meaningful and consistent indication of the relevance of a neuron. 
This appeals to the intuition that a unit may be highly active on an instance without necessarily contributing to the prediction outcome, and suggests that activation levels are not a consistent proxy for the relevance of a neuron.

% \begin{figure}
%   \label{fig:experts_activations}
%   \begin{center}
%     \resizebox{\textwidth}{!}{%
%     \begin{tabular}{|l|c|c|c|c|}\hline
%       Class & Original Recall & Original Precision & Compressed Recall & Compressed Precision \\\hline
%       Chainsaw (491)   &  14\% & 100\% &  0\% & 100\% \\\hline
%       Bonnet (452)     &  62\% & 100\% &  0\% & 100\% \\\hline
%       Park Bench (703) &  52\% & 100\% &  0\% & 100\% \\\hline
%       Sloth Bear (297) &  36\% & 100\% &  0\% & 100\% \\\hline
%       Pelican (144)    &  65\% & 100\% &  0\% & 100\% \\\hline
%     \end{tabular}
%     }
%   \end{center}
%   \caption{
%     Model compression using activations as heuristic. Recall and accuracy are shown for 5 randomly selected ImageNet classes.
%   }
% \end{figure}

\section{Explaining Instances}
\label{section:slice_exp}
%!TEX root=./nips_2018_submission.tex

In this section we demonstrate that our influence measure is also useful when explaining model behavior by instantiating general information measured across a distribution of interest to an \emph{individual} instance. 
We begin by noting that our measure generalizes previous gradient-based influence measures, so it can be parameterized to produce the same sorts of explanations shown in prior work. 
We then introduce two parameterizations that yield new sorts of explanations, showing the broader potential for our work in practical settings.
In particular, we show that \emph{(1)} internal influence can be leveraged to gain a more complete understanding of a model's decision on an instance by breaking the influential features into high-level components recognized by the model, and \emph{(2)} changing the quantity of interest yields explanations specific to how the model distinguishes between related classes on specific instances.

\subsection{Focused Explanations from Slices}
\label{sec:individual-slice}
Slice influence (Equation~\ref{eq:infl-slice}) characterizes the extent to which neurons in an intermediate layer are relevant to a particular network behavior.
%We can construct explanations by measuring the input influence on internal units that are distinguished by high slice influence on an appropriate quantity of interest, and using existing visualization techniques~\cite{integratedGrads} to interpret the concept it represents.
We can construct explanations by using existing visualization techniques~\cite{integratedGrads} to interpret the concepts represented by internal units that are distinguished by high slice influence on an appropriate quantity of interest.
These explanations allow us to decompose the influential input features into high-level concepts recognized by the model.

\label{section:comparative}

\begin{figure}[t]
\centering
\begin{subfigure}{.14\textwidth}
\centering
\includegraphics[width=\textwidth]{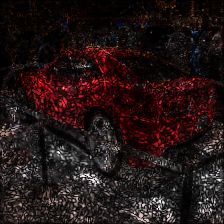}
\caption{ }
\label{fig:saliency}
\end{subfigure}
\begin{subfigure}{.423\textwidth}
\centering
\footnotesize
\includegraphics[width=\textwidth]{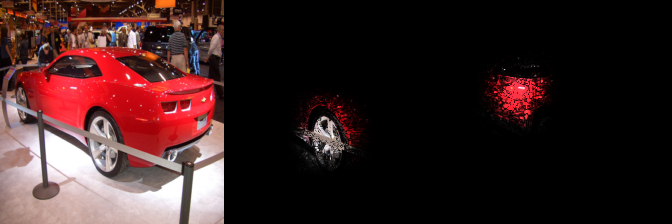}
\caption{ }
\label{fig:slice}
\end{subfigure}%
\hfill
\begin{subfigure}{.423\textwidth}
\centering
\footnotesize
\includegraphics[width=\textwidth]{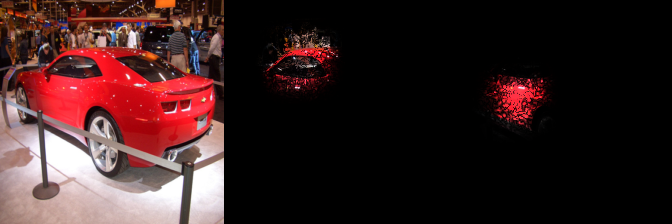}
\caption{ }
\label{fig:comp}
\end{subfigure}
\caption{
  \textbf{(a)} Input influence in the VGG16~\cite{Simonyan14c} network parameterized to match saliency maps~\cite{saliency}.
  \textbf{(b)} Interpretation of the two most influential units from the convolutional layer \texttt{conv4\_1}.
  \textbf{(c)} Comparative explanation visualizing the top two units at the \texttt{conv4\_1} layer that distinguish `sports car' from `convertible.'
}
\label{fig:visualizations}
\end{figure}

Figure~\ref{fig:slice} shows the results of interpreting the influences taken on a slice of the VGG16~\cite{Simonyan14c} network corresponding to an intermediate convolutional layer (\texttt{conv4\_1}). In this example we visualize the two most influential units for the quantity of interest characterizing correct classification of the image shown on the left of Figure~\ref{fig:slice} (sports car). More precisely, the quantity of interest used in this example is $f |_{L}$, i.e., the projection of the model's softmax output to the coordinate corresponding to the correct label $L$ of this instance. The interpretation for each of these units was then obtained by measuring the influence of the input pixels on these units along each color channel, and scaling the pixels in the original image accordingly~\cite{integratedGrads}.

Because convolutional units have a limited receptive field, the resulting interpretation shows distinct regions in the original image, in this case corresponding to the wheel and side of the car, that were most relevant to the model's predicted classification. When compared to the explanation provided by input influence, e.g., as shown in Figure~\ref{fig:saliency}, it is evident that the explanation based on the network's internal units more effectively localizes the features used by the network in its prediction.

\subsection{Comparative Explanations}
\label{sec:comparative}

Influence-directed explanations are parameterized by a quantity of interest, corresponding to the function $f$ in Equation~\ref{eq:infl-slice}. Changing the quantity of interest gives additional flexibility in the characteristic explained by the influence measurements and interpretation. One class of quantities that is particularly useful in answering counterfactual questions such as, ``Why was this instance classified as $L_1$ rather than $L_2$?'', is given by the \emph{comparative quantity}. Namely, if $f$ is a softmax classification model that predicts classes $L_1, \ldots, L_n$, then the comparative quantity of interest between classes $L_i$ and $L_j$ is $f|_{i} - f|_{j}$. When used in Equation~\ref{eq:infl-slice}, this quantity captures the tendency of the model to classify instances as $L_i$ over $L_j$.

Figure~\ref{fig:comp} shows an example of a comparative explanation. The original instance shown on the left of Figure~\ref{fig:comp} is labeled as `sports car.' We measured influence using a comparative quantity against the leaf class `convertible,' using a slice at the \texttt{conv4\_1} convolutional layer. The interpretation was computed on the top two most influential units at this layer in the same way as discussed in Section~\ref{sec:individual-slice}.

As in the examples from Figure~\ref{fig:concepts}, the receptive field of the most influential unit corresponds to the region containing the hard top of the vehicle, which is understood to be its most distinctive feature according to this comparative quantity. While both the explanations from Figure~\ref{fig:slice} and Figure~\ref{fig:comp} capture features common to cars, only the comparative explanation isolates the elements of the feature space distinctive to the type of car.

\section{Axiomatic Justification of Measures}
\label{sec:axioms}
%!TEX root=./nips_2018_submission.tex

In this section we justify the family of measures presented in Section~\ref{sec:influence} by defining a set of natural axioms for influence measures in this setting, and then proving a tight characterization.
We first address the case where the influence is measured with respect to inputs, i.e. when a slice is $f$ paired with the identity function, and then generalize to internal layers. This approach is inspired, in part, by axiomatic justification for power indices in cooperative game theory \cite{roth1988shapley,aumannShapley}--an approach that has been previously employed for explaining predictions of machine learning models~\cite{QII,integratedGrads}. An important difference, as we elaborate below, is that we carefully account for distributional faithfulness in this work.

%\paragraph{Desiderata.} We begin by describing some intuitive desiderata of measures of influence.
%\begin{itemize}
%  \item \emph{Causality:} We intend our influence measures to identify parts of
%    the network, that when changed, have the most effect on outcomes. This suggests
%    a causal view of influence. We focus on the partial derivative on outcomes as a measure
%    of causal influence locally as it measures the change in outcomes corresponding
%    to an infinitesimal change in inputs, while keeping all other inputs fixed.
%  \item \emph{Distributional Faithfulness:} Models operating on high
%    dimensional spaces such as neural networks are not expected to behave
%    reliably on instances outside the input distribution. As a result, we wish that
%    our measures only evaluate the behavior of the network on points within the
%    distribution, a property we call \emph{distributional faithfulness}. This
%    property is key to our axiomatic treatment of influence and is reflected in
%    Axioms~\ref{ax:dm}, and~\ref{ax:dl} in Section~\ref{sec:axioms}.
%  \item \emph{Flexibility:} We wish that with appropriate parameterization, our
%    measures should be useful for answering a general class of causal queries.
%    \end{itemize}
%

\subsection{Input Influence}
\label{sec:input}

%We narrow the space of influence measures using three axioms that
%characterize desirable properties of influence measures for machine learning
%models with respect to a quantity and distribution of interest, and then prove that these
%axioms uniquely define the above measure.

We first characterize a  measure $\infl_i(f,P)$ that measures the
influence of input $i$ for a quantity of interest $f$, and distribution of interest $P$.
The first axiom, \emph{linear agreement} states that for linear systems, the coefficient of
an input is its influence. Measuring influence in linear models is straightforward since
a unit change in an input corresponds to a change in the output given by the coefficient.

\begin{axiom}[Linear Agreement]\label{ax:la}
  For linear models of the form $f(\x) = \sum_i \alpha_ix_i$, $\infl_i(f, P) = \alpha_i$.
\end{axiom}

The second axiom, \emph{distributional marginality} is inspired by the \emph{marginality principle}~\cite{asAxioms}
in prior work on cooperative game theory. The marginality principle states
that an input's importance only depends on its own contribution to the output.
Formally, if the partial derivatives with respect to an input
of two functions  are identical at all input instances, then that input is equally important for
both functions.

Our axiom of distributional marginality (DM) is a weaker form of this requirement that only requires
equality of importance when partial derivatives are same for points in the support of the distribution.
This axiom ensures that the influence measure only depends on the
behavior of the model on points within the manifold containing the input
distribution. Such a property is important for deep learning systems since the
behavior of the model outside of this manifold is unpredictable.

\begin{axiom}[Distributional marginality (DM)]\label{ax:dm}
  If $$P\left(\pderivat{f_1}{x_i}{X} = \pderivat{f_2}{x_i}{X}\right) = 1,$$ where $X$ is the random variable over instances from $\X$, then $\infl_i(f_1, P) = \infl_i(f_2, P)$.
\end{axiom}

The third axiom, \emph{distribution linearity} states that the influence measure is linear
in the distribution of interest. This ensures that influence measures are properly weighted over the input space, i.e.,
influence on infrequent regions of the input space receive lesser weight in the influence measure as compared to more frequent regions.

\begin{axiom}[Distribution linearity (DL)]\label{ax:dl}
  For a family of distributions indexed by some $a \in \cal A$,
  $P(x) = \int_{\cal A} g(a)P_a(x)da$, then $\infl_i(f, P) = \int_{\cal A} g(a)\infl_i(f, P_a)da$.
\end{axiom}

% We can show that the only influence measure that satisfies these three axioms is the weighted gradient of the input probability distribution.

\begin{theorem}
  The only measure that satisfies linear agreement, distributional marginality and distribution linearity is given by
  $$
    \infl_i(f, P) =  \int_\X \pderivat{f}{x_i}{\x} P(\x) d\x.
  $$
\end{theorem}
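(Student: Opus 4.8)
The plan is to establish both directions of the characterization: that the displayed measure satisfies the three axioms, and conversely that these axioms force any admissible measure to coincide with it. The existence direction is routine, so I would dispatch it quickly. For a linear $f(\x)=\sum_j\alpha_j x_j$ the integrand $\pderivat{f}{x_i}{\x}$ equals the constant $\alpha_i$, so integrating against any $P$ returns $\alpha_i$, giving Axiom~\ref{ax:la}; if two functions have partials that agree $P$-almost everywhere their integrals coincide, giving Axiom~\ref{ax:dm}; and linearity of the integral in its measure argument gives Axiom~\ref{ax:dl}. The substance is the uniqueness direction, and the key structural idea is that Distribution Linearity lets me reduce an arbitrary distribution of interest to point masses, after which Linear Agreement and Distributional Marginality pin down the value on a single point.

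Concretely, for uniqueness I would first write an arbitrary distribution as a continuous mixture of Dirac point masses. Taking the index set $\cal A = \X$, the weight $g(a)=P(a)$, and $P_a=\delta_a$ the point mass at $a$, we have $P(\x)=\int_\X P(a)\,\delta_a(\x)\,da$, so Axiom~\ref{ax:dl} yields
\[\infl_i(f,P)=\int_\X P(a)\,\infl_i(f,\delta_a)\,da.\]
It therefore suffices to determine $\infl_i(f,\delta_{\x_0})$ for every fixed $\x_0\in\X$ and every admissible measure.

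To evaluate the influence at a point mass, I would introduce the linear surrogate $\ell(\x)=\pderivat{f}{x_i}{\x_0}\,x_i$, whose $i$-th partial derivative equals the constant $\pderivat{f}{x_i}{\x_0}$ everywhere, and in particular agrees with that of $f$ at $\x_0$. Since the support of $\delta_{\x_0}$ is the single point $\x_0$, the hypothesis of Axiom~\ref{ax:dm} holds for the pair $f,\ell$ under $\delta_{\x_0}$, so $\infl_i(f,\delta_{\x_0})=\infl_i(\ell,\delta_{\x_0})$. Because $\ell$ is a linear model with $i$-th coefficient $\pderivat{f}{x_i}{\x_0}$, Axiom~\ref{ax:la} (which applies for every distribution, point masses included) gives $\infl_i(\ell,\delta_{\x_0})=\pderivat{f}{x_i}{\x_0}$. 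Substituting back into the mixture identity produces exactly $\infl_i(f,P)=\int_\X \pderivat{f}{x_i}{\x_0}\,P(\x_0)\,d\x_0$, completing the argument.

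The main obstacle I anticipate is the rigor of the point-mass decomposition: Axiom~\ref{ax:dl} is stated for a measurable family of distributions, so I would need to confirm that the Dirac representation $P=\int_\X P(a)\,\delta_a\,da$ is admissible in the sense the axiom intends, and, since the paper works with densities $P(\x)$, to treat the point-mass limit with appropriate care. The remaining steps---choosing the linear surrogate and matching a single partial derivative---are then straightforward, since Distributional Marginality collapses to a pointwise condition under a point mass and Linear Agreement reads off the coefficient directly.
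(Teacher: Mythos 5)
Your proposal is correct and follows essentially the same route as the paper's own proof: both reduce to Dirac point masses via distribution linearity, introduce the linear surrogate $\ell(\x) = \pderivat{f}{x_i}{\x_0}\,x_i$ to invoke linear agreement, and use distributional marginality to transfer the value to $f$ (you merely run the mixture decomposition first rather than last, and additionally verify the easy direction that the formula satisfies the axioms, which the paper leaves implicit). Your caveat about the rigor of the Dirac representation within a density framework applies equally to the paper's argument and is a fair observation, but does not change the substance.
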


\begin{proof}
  Choose any function $f$ and $P_{\vec a}(\x) = \delta(\x - \vec a)$, where
  $\delta$ is the Dirac delta function on $\X$. Now, choose $f'(\x) =
  \pderiv{f}{\x_i}|_{\vec a} x_i$. By \axla, it must be the case that,
  $\infl(f', P_{\vec a}(\x)) = \pderiv{f}{x_i}|_{\vec a}$. By distributional marginality, we
  therefore have that $\infl_i(f, P_{\vec a}) = \infl_i(f', P_{\vec a}) =
  \pderiv{f}{x_i}|_a$. Any distribution $P$ can be written as $P(\x) = \int_\X
  P({\vec a})P_{\vec a}(\x)d{\vec a}$. Therefore, by the distribution linearity axiom, we have that
  $\infl(f, P) = \int_X P({\vec a})\infl(f, P_a) da = \int_\X P({\vec a})\pderiv{f}{x_i}|_{\vec a} d{\vec a}$.
\end{proof}

%The distribution of interest focuses the measure on a particular class of instances. Examples of distributions of interest that we use in the paper are point distributions that focus on a single instances, conditioned distributions that focus on a particular class, or the entire training distribution for the model.

\subsection{Internal influence}
\label{sec:internal}

In this section, we generalize the above measure of input influence to a measure that can be used to measure the
influence of an internal neuron.
We again take an axiomatic approach, with two natural invariance properties on
the structure of the network.

The first axiom states that the influence measure is
agnostic to how a network is sliced, as long as the
neuron with respect to which influence is measured
is unchanged. Below, the notation $\x_{-i}$ refers
to the vector $\x$ with element $i$ removed and $\x_{-i}y_i$ is the
vector $\x$ with the $i^{\text{th}}$ element replaced with $y_i$.

Two slices, $s_1 = \langle g_1, h_1\rangle$ and $s_2 = \langle g_2, h_2\rangle$, are $j$-equivalent if for all $\x\in \X$, and $z \in \Z$, $h_1(\x)_j = h_2(\x)_j$, and $g_1(h_1(\x)_{-j} z_j) = g_2(h_2(\x)_{-j} z_j)$. Informally, two slices are $j$-equivalent as long as they have the same function for
representing $z_j$, and the causal dependence of the outcome on $z$ is identical.

\begin{axiom}[Slice Invariance]
For all $j$-equivalent slices $s_1$ and $s_2$, $\infl_j^{s_1}(f, P) = \infl_j^{s_2}(f, P)$.
\end{axiom}

The second axiom equates the input influence of an input with the internal influence of a perfect predictor of that input. Essentially, this encodes a consistency requirement between inputs and internal neurons that if an internal neuron has exactly the same behavior as an input, then the internal neuron should have the same influence as the input.

\begin{axiom}[Preprocessing] Consider $h_i$ such that $P(X_i =  h_i(X_{-i})) = 1$. Let $s = \langle f_1, h\rangle$, be such that $h(\x_{-i}) = \x_{-i}h_i(\x_{-i})$, which is a slice of $f_2(\x_{-i}) = f_1(\x_{-i}h_i(\x_{-i}))$, then $\infl_i(f_1, P) = \infl_i^s(f_2, P)$.
\end{axiom}

%[TODO: Draw picture for preprocessing.]
 We now show that the only measure that satisfies these two properties is the one presented above
 in Equation~\ref{eq:infl-slice}.
 First, we prove the following lemma that shows that expected gradient computed at a slice can be
 computed with either the probability distribution at the input or the slice.

\begin{lemma}
\label{lem:switch}
Let $s=\langle g,h\rangle$ be a slice for $f$.
Given distribution $P_\X(\x)$ on $\X$,
let $P_\Z(\z)$ be the probability distribution induced by applying $h$ on $\x$, given by:

\[P_\Z(\z) = \int_\X P_\X(\x)\delta(h(\x) - \z)d\x.\]
Then $\infl_j(g,P_\Z) = \infl_j^s(f,P_\X)$.

\end{lemma}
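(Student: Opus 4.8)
The plan is to unfold both sides of the claimed equality using the definitions already in hand, and then reduce the difference to a single application of Fubini's theorem together with the sifting property of the Dirac delta. On the left, $\infl_j(g, P_\Z)$ is the \emph{input} influence of coordinate $j$ of the function $g : \Z \to \R$, so by the characterization established in the preceding theorem it equals $\int_\Z \pderivat{g}{z_j}{\z}\, P_\Z(\z)\, d\z$. On the right, $\infl_j^s(f, P_\X)$ is the slice influence of Definition~\ref{defn:infl-slice}, namely $\int_\X \pderivat{g}{z_j}{h(\x)}\, P_\X(\x)\, d\x$. The goal is therefore to show these two integrals agree.

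First I would substitute the given expression $P_\Z(\z) = \int_\X P_\X(\x)\,\delta(h(\x) - \z)\, d\x$ into the left-hand integral, producing a double integral over $\Z \times \X$. Next I would interchange the order of integration so that the $\z$-integral is innermost, and recognize the inner integral as the evaluation of the continuous function $\z \mapsto \pderivat{g}{z_j}{\z}$ at the point $\z = h(\x)$, by the sifting property of $\delta$. This collapses the inner integral to $\pderivat{g}{z_j}{h(\x)}$, leaving precisely the slice influence. Concisely, the chain is
\begin{align*}
\infl_j(g, P_\Z)
&= \int_\Z \pderivat{g}{z_j}{\z} \left( \int_\X P_\X(\x)\, \delta(h(\x) - \z)\, d\x \right) d\z \\
&= \int_\X P_\X(\x) \left( \int_\Z \pderivat{g}{z_j}{\z}\, \delta(h(\x) - \z)\, d\z \right) d\x \\
&= \int_\X P_\X(\x)\, \pderivat{g}{z_j}{h(\x)}\, d\x = \infl_j^s(f, P_\X).
\end{align*}

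The main obstacle is not the algebra but its justification: the Dirac delta is a distribution rather than a genuine function, so both the Fubini interchange and the sifting step require regularity hypotheses to be rigorous. I would discharge this either by assuming $g$ is continuously differentiable and the integrands absolutely integrable (so Fubini--Tonelli applies and the sifting property holds against the continuous test function $\partial g / \partial z_j$), or, to avoid manipulating $\delta$ directly, by replacing $\delta$ with a mollifier $\delta_\varepsilon$, performing the interchange for each $\varepsilon > 0$, and passing to the limit $\varepsilon \to 0$. Either route reduces the statement to the standard change-of-variables identity for pushforward measures: integrating a function of $\z$ against the distribution of $h(X)$ gives the same value as integrating its pullback against $P_\X$, which is exactly what the lemma asserts.
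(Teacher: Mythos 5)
Your proof is correct and takes essentially the same approach as the paper's own proof: expand $\infl_j(g, P_\Z)$ via the input-influence formula, substitute the defining expression for $P_\Z$, interchange the order of integration, and collapse the inner integral by the sifting property of the Dirac delta to recover $\infl_j^s(f, P_\X)$. The only difference is that you flag the regularity conditions (Fubini--Tonelli, mollifier approximation) needed to make the delta manipulation rigorous, whereas the paper carries out the same steps formally without comment.
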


\begin{proof}
\begin{align}\infl_j(g,P_\Z) =& \int_\Z \pderivat{g}{z_j}{\z}P_\Z(\z)d\z \\
                           =& \int_\Z \pderivat{g}{z_j}{\z}\int_\X P_\X(\x)\delta(h(\x) - \z)d\x d\z \\
                           =& \int_\X P_\X(\x)\int_\Z \pderivat{g}{z_j}{\z} \delta(h(\x) - \z) d\z d\x  \\
                           =& \int_\X \pderivat{g}{z_j}{h(\x)} P_\X(\x) d\x\\
                           =& \infl_j^s(f,P_\X)
\end{align}

\end{proof}

\begin{theorem}
The only measure that satisfies slice invariance and preprocessing is Equation~\ref{eq:infl-slice}.
%\[\infl_j^s(f,P) = \int_{\X}\pderivat{g}{z_j}{h(\x)}P(\x)d\x\]
\end{theorem}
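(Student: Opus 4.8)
The plan is to reduce the internal-influence computation to an input-influence computation, where uniqueness is already settled by the characterization in Section~\ref{sec:input}, and then use Slice Invariance to transfer the result back to the original slice. Fix a slice $s = \langle g, h\rangle$ of $f$, a neuron $j$, and a distribution $P$ on $\X$. The central device is to append a fresh input coordinate, call it $*$, that is a perfect predictor of the neuron $z_j$. Concretely, I would work on the augmented input space $\X \times \R$, define $h_*(\x) = h(\x)_j$, and let $P'$ be the law of $(\x, h_*(\x))$ for $\x$ drawn from $P$, so that $P'(X_* = h_*(X_{-*})) = 1$. On this space I set $f_1(\x, w) = g(h(\x)_{-j}\, w)$, i.e.\ $g$ evaluated at $h(\x)$ with its $j$-th coordinate overwritten by $w$. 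The reconstruction map $h^{\mathrm{pre}}(\x) = (\x, h_*(\x))$ then satisfies $f_1(h^{\mathrm{pre}}(\x)) = g(h(\x)) = f(\x)$, so $s^{\mathrm{pre}} = \langle f_1, h^{\mathrm{pre}}\rangle$ is a slice of $f$ matching the template of the Preprocessing axiom with $i = *$.

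With this setup the argument is a chain of three equalities. First, Preprocessing gives $\infl_*(f_1, P') = \infl_*^{s^{\mathrm{pre}}}(f, P')$, turning the internal influence of the appended coordinate into an input influence. Second, since $f_1$ is an ordinary function of its inputs, the input-influence characterization of Section~\ref{sec:input} forces $\infl_*(f_1, P') = \int_\X \pderivat{f_1}{w}{(\x, h_*(\x))} P(\x)\,d\x$; because $w$ enters $f_1$ exactly as the $j$-th argument of $g$ and $h_*(\x) = h(\x)_j$, this derivative equals $\pderivat{g}{z_j}{h(\x)}$, so the integral is precisely Equation~\ref{eq:infl-slice}. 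Third, I would invoke Slice Invariance to identify $\infl_*^{s^{\mathrm{pre}}}(f, P)$ with $\infl_j^s(f, P)$. Chaining these yields $\infl_j^s(f, P) = \int_\X \pderivat{g}{z_j}{h(\x)} P(\x)\,d\x$, which is the claim; Lemma~\ref{lem:switch} is available here to rewrite this equivalently as the input influence $\infl_j(g, P_\Z)$ under the induced distribution $P_\Z$, reconciling the two viewpoints.

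The step that needs the most care is the last one: verifying that $s^{\mathrm{pre}}$ and the original slice $s$ are $j$-equivalent so that Slice Invariance applies (under the identification of the appended coordinate $*$ with the neuron $j$). This requires checking both clauses of $j$-equivalence, namely that the two slices compute the same neuron value, $h^{\mathrm{pre}}(\x)_* = h_*(\x) = h(\x)_j$, and that the output depends on that neuron identically, $f_1(h^{\mathrm{pre}}(\x)_{-*}\, z) = f_1(\x, z) = g(h(\x)_{-j}\, z)$, which is exactly $g$ with its $j$-th coordinate set to $z$. Both hold by construction, but lining up the overwrite/restriction bookkeeping ($h(\x)_{-j}\,w$ versus $h^{\mathrm{pre}}(\x)_{-*}\,z$) with the axiom's notation is the delicate part. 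A secondary point worth confirming is that the input-influence theorem applies to the singular distribution $P'$, which it does, since its proof proceeds by decomposing an arbitrary distribution into Dirac point masses via distribution linearity.
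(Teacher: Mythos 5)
Your uniqueness argument is essentially the paper's own proof: your augmented slice $s^{\mathrm{pre}} = \langle f_1, h^{\mathrm{pre}}\rangle$ with $h^{\mathrm{pre}}(\x) = (\x, h(\x)_j)$ and $f_1(\x, w) = g(h(\x)_{-j}w)$ is exactly the paper's $s' = \langle g', h'\rangle$, and your three-step chain (Preprocessing to turn the appended coordinate's slice influence into an input influence, the Section~\ref{sec:input} characterization together with the change of variables of Lemma~\ref{lem:switch} to evaluate that input influence as $\int_{\X}\pderivat{g}{z_j}{h(\x)}P(\x)d\x$, and Slice Invariance via $j$-equivalence of $s$ and $s^{\mathrm{pre}}$) is the same chain the paper uses, just stated in a different order. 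Your verification of $j$-equivalence and your remark that the input characterization tolerates the singular pushforward distribution $P'$ are both correct.

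The one genuine omission is that you prove only the uniqueness direction. The theorem is a characterization, and the paper's proof spends its entire first half verifying that the measure of Equation~\ref{eq:infl-slice} actually \emph{satisfies} slice invariance and preprocessing; without that half, the statement could in principle be vacuous (no measure satisfying the axioms at all). The verification is routine and you should add it: for slice invariance, differentiate the $j$-equivalence identity $g_1(h_1(\x)_{-j}z_j) = g_2(h_2(\x)_{-j}z_j)$ in $z_j$ and evaluate at $z_j = h_1(\x)_j = h_2(\x)_j$ to see that the integrands of Equation~\ref{eq:infl-slice} agree pointwise; for preprocessing, use $P(X_i = h_i(X_{-i})) = 1$ to conclude that $\pderivat{f_1}{x_i}{\x_{-i}h_i(\x_{-i})} = \pderivat{f_1}{x_i}{\x}$ almost surely under $P$, so the slice-influence integral for $s$ coincides with the input-influence integral for $f_1$.
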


\begin{proof}
Assume that two slices $s_1 = \langle g_1, h_1\rangle$ and $s_2 = \langle g_2, h_2 \rangle$ are $j$-equivalent. Therefore, $g_1(h_1(\x)_{-j}z_j) = g_2(h_2(\x)_{-j}z_j)$. Taking partial derivatives
with respect to $z_j$, we have that:
\[\pderivat{g_1}{z_j}{h_1(\x)_{-j}z_j} = \pderivat{g_2}{z_j}{h_2(\x)_{-j}z_j}\]
Now, since $h_1(\x)_j = h_2(\x)_j$, we have that
\[\pderivat{g_1}{z_j}{h_1(\x)} = \pderivat{g_2}{z_j}{h_2(\x)}\]
Plugging the derivatives into \ref{eq:infl-slice}, we get that $\infl_j^{s_1}(f, P) = \infl_j^{s_2}(f, P)$, and that the measure satisfies slice invariance.

Consider $h_i$ such that $P(X_i =  h_i(X_{-i})) = 1$. Let $s = \langle f_1, h\rangle$, be such that $h(\x_{-i}) = \x_{-i}h_i(\x_{-i})$, which is a slice of $f_2(\x_{-i}) = f_1(\x_{-i}h_i(\x_{-i}))$.
\begin{align}
  \infl_i^s(f_2,P) =& \int_\X \pderivat{f_1}{x_i}{\x_{-i}h(\x_{-i})} P(\x) d\x \\
                     =& \int_\X \pderivat{f_1}{x_i}{\x} P(\x) d\x \\
                     =& \infl_i(f_1, P).
\end{align}
Therefore, the measure satisfies preprocessing.

For the opposite direction, consider any slice $s = \langle g,h\rangle$ of $f$. We
wish to show that if $\infl_j^s(f,P_\X)$ satisfies slice invariance and preprocessing, then $\infl_j^s(f,P_\X) = \int_{\X}\pderivat{g}{z_j}{h(\x)}P(\x)d\x$.
Consider the slice $s' = \langle g',h' \rangle$ such that $h'(\x) = (\x, h_j(\x))$, and $g'(\x, z_j) = g(h_{-j}(\x)z_j)$. Essentially $s'$ is a slice of $f$ that only processes $h_j(\x)$.
By Lemma~\ref{lem:switch},
$\infl_j(g', P_{\X}) = \int_{\X}\pderivat{g}{z_j}{h(\x)}P(\x)d\x$.
By preprocessing $\infl^{s'}_j(f, P_{\X}) = \infl_j(g', P_{\X})$. As $s$ and $s'$ are $j$-equivalent,
$\infl^{s}_j(f, P_{\X}) = \infl^{s'}_j(f, P_{\X})$.
\end{proof}

%

%A second property, called \emph{efficiency}, which is especially important for revenue
%division, is that attributions add up to the total value generated. This
%ensures that no value is left unattributed. The marginality principle, along
%with efficiency uniquely define the \emph{Aumann-Shapley
%Value}\cite{aumannShapley}.  In~\cite{integratedGrads}, the Aumann-Shapley
%Value is used for attributions with efficiency as a justification.
%While it is
%unclear that efficiency is an essential requirement in our setting, the
%Aumann-Shapley value can be recovered in our framework by choosing the
%distribution of interest as the uniform distribution on the line segment
%joining an instance $\x$ and a baseline image $\vec b$. Certain choices of
%baselines can be problematic from the point of view of distributional
%faithfulness since the line segment of linear combinations between them might
%lie significantly out of distribution.  The particular baseline chosen in
%\cite{integratedGrads} is the zero vector, where the line segment represents
%scaled images, and could be reasonably called within distribution.

\section{Related Work}
\label{sec:related}
%!TEX root=./itc_submission.tex
\begin{table*}[t]
\centering
% \footnotesize
% \scriptsize
\normalsize
\begin{tabular}{>{\raggedright}m{25ex}c @{\hspace{0.15cm}} c @{} c @{} c @{\hspace{0.15cm}} c}
& \multicolumn{3}{c}{\it Explanation framework properties} & \multicolumn{2}{c}{\it Influence properties} \\
& {\bf Quantity} & {\bf Distribution} & {\bf Internal} & \textbf{Marginality} & \textbf{Sensitivity} \\%& \textbf{Completeness} \\
\hline
Influence-Directed & \checkmark & \checkmark & \checkmark & \checkmark & \checkmark${}^{*}$ \\%& \checkmark${}^{*}$ \\[1em]

Integrated Gradients~\cite{integratedGrads} & & \checkmark${}^{-}$ & & \checkmark & \checkmark \\%& \checkmark  \\[1em]

Simple Taylor~\cite{bach-plos15} & & \checkmark${}^{-}$ & & \checkmark & \\%& \checkmark \\[1em]

Sensitivity Analysis~\cite{saliency} & & & & \checkmark & \\%& \\[1em]

Deconvolution~\cite{deconv} & & & $\checkmark^{\dagger}$ & & \\%& \\[0.5em]

Guided Backpropagation~\cite{backprop} & & & $\checkmark^{\dagger}$ & \checkmark & \\%&  \\[1em]

Relevance Propagation~\cite{bach-plos15} & & \checkmark${}^{-}$ & $\checkmark^{\dagger}$ & \checkmark${}^{*}$ & \checkmark${}^{*}$ \\[.5em]%& \checkmark

\end{tabular}

\caption{Comparison of the influence-directed explanations proposed here to
  prior related work.
  %The first three columns refer to capabilities of the corresponding explanation framework.
  %: the flexibility in the choice of \textbf{Quantity} of interest and \textbf{Distribution} of interest over instance, and the ability to examine the role of \textbf{Internal} neurons.
  %The latter two columns describe properties of the influence measure used to build explanations.
  %: \textbf{Marginality} requires that the influence of each feature depends only on its own marginal contribution;
  %\textbf{Sensitivity} requires that if two instances differ in one feature and yield different predictions, then that feature is assigned non-zero influence.
  %\textbf{Completeness} requires that the aggregate difference
  %between influence on two instances sums to the difference of their outputs.
  %See \cite{integratedGrads} for a more detailed discussion of Sensitivity and
  %Completeness.
  $\checkmark^-$ denotes that the framework has limited flexibility for the feature, $\checkmark^*$ denotes that the framework may have the feature under certain parameterizations, and $\checkmark^\dagger$ denotes that the framework measures internal influence only as an intermediary step to computing feature influence.
}\label{tab:comparison}
\end{table*}

% Measuring the causal influence of internal neurons on outcomes
% Prior work on interpreting CNNs has focused on answering two questions: \emph{(1)} given an input image, what part of the instance is relevant to a particular neuron?; and \emph{(2)} what maximizes the activation of a particular neuron?
We begin by pointing out some high-level differences between our work and other approaches as shown in Table~\ref{tab:comparison}.
We then discuss important specific differences in more detail below.

The leftmost three columns of Table~\ref{tab:comparison} describe properties on which explanation techniques differ, and on which our approach is parameterized.
First, our approach is parametric in a \textbf{Quantity} of interest that allows us to provide explanations for different behaviors of a system, as opposed to simply explaining absolute instance predictions.
Second, we can specify a \textbf{Distribution} of interest, allowing explanations of network behavior across different groups of instances (e.g., an instance or a particular class).
Cells marked $\checkmark^{-}$ in these columns denote limited flexibility along this dimension through the choice of a baseline, as in integated gradients~\cite{integratedGrads}.
Finally, our approach can select which \textbf{Internal} neurons to measure, which, as we demonstrate in Section~\ref{sec:experiments}, is key to identifying learned concepts.
By contrast, integrated gradients~\cite{integratedGrads}, sensitivity analysis~\cite{saliency}, and simple Taylor decomposition~\cite{bach-plos15} assign importance solely to the input features. 
Deconvolution~\cite{deconv}, guided backpropagation~\cite{backprop}, and layer-wise relevance propagation\cite{bach-plos15}, use internal influence in the course of computing input influence, but do not apply internal influence measurements to identifying learned concepts.

The rightmost two columns in Table~\ref{tab:comparison} describe properties of the influence measure used to build explanations.
\textbf{Marginality} requires that the influence of each feature depends only on its own marginal contribution, which is implied by distributional marginality.
Measures not satisfying marginality may attribute behavior to the wrong features, giving misleading results.
\textbf{Sensitivity} requires that if the instance and a \textit{baseline} instance differ in one feature and yield different predictions, then that feature is assigned non-zero influence.
Because sensitivity refers to a baseline, our explanations must specify the baseline via the distribution of interest to achieve this property.
Measures failing to satisfy sensitivity may fail to identify features that are causally relevant to the explanation, leading to ``blind spots'' and misleading results.
%The rightmost two columns in Table~\ref{tab:comparison} reflect that our choice of influence measures are guided by axiomatic choices different from those in prior work~\cite{integratedGrads}. A notable difference stems from our distributional faithfulness criteria, which imposes a weaker distribution marginality principle than the marginality principle imposed by Integrated Gradients. A practical consequence of this criteria for Integrated Gradients is that only certain choices of baselines satisfy faithfulness~(denoted by $\checkmark^{*}$). %In contrast, the prior work does not provide formal guidance
%on the choice of the baseline.

\subsubsection{Identifying influential regions}
One approach to interpreting predictions for convolutional networks is to map activations of neurons back to regions in the input image that are the most relevant to the outcomes of the neurons.
Possible approaches for localizing relevance include: \emph{(1)} visualizing gradients~\cite{saliency, integratedGrads,bach-plos15}, \emph{(2)} propagating activations back using gradients~\cite{deconv,backprop,bach-plos15}, and \emph{(3)}
fitting a simpler interpretable model around a test point to predict relevant input regions~\cite{lime}.
Because these approaches relate instance-specific features to instance-specific predictions, their results do not generalize beyond a single input point, as demonstrated in Section~\ref{sec:exp:effectiveness}.

Most prior approaches have not leveraged internal units.
Two exceptions are class activation mapping (CAM, Grad-CAM)~\cite{gradCAM, CAM}, in which objects in an image are localized by measuring the influence of feature maps, and a recent technique proposed by Oramas et al.~\cite{oramas} in which internal neurons are interpreted to provide an explanation.
CAM and Grad-CAM differ from our work in that internal influences are aggregated to represent the localization of a concept identifying an entire class in an input instance, whereas our approach is more granular and can isolate components that represent simpler concepts than an entire class.
Oramas et al.~\cite{oramas} use unit activation levels to determine relevance, which, as we demonstrate in Section~\ref{exp:model_comp}, is less effective at identifying important concepts than our influence measure.
Concurrent work~\cite{conductance} has also suggested a slightly different approach; namely, measuring the input attribution that ``flows through'' a particular internal neuron.

\subsubsection{Visualization by maximizing activation}
An orthogonal approach is to visualize learned features by identifying input instances that maximally activate a neuron, achieved by either optimizing the activation in the input space~\cite{saliency,MahendranV14,NguyenDYBC16}, or by searching for instances in a dataset~\cite{girshick14rich}.
These techniques can complement our work by providing a means to visualize the concept learned by a set of neurons that the influence measure identifies as important for a particular quantity and distribution of interest.

\subsubsection{Attribution vs. Influence}
Of the measures summarized in Table~\ref{tab:comparison}, some, e.g., Integrated Gradients~\cite{integratedGrads} and Relevance Propagation~\cite{bach-plos15}, measure \emph{attribution}, while others, e.g., Sensitivity Analysis~\cite{saliency} and Influence-Directed explanations, measure \emph{influence}.
Here, attribution can be understood as the amount of the quantity of interest that can be attributed to a particular neuron. In contrast, influence addresses the sensitivity of the quantity of interest to a particular input or input distribution.

Dhamdhere et al.~\cite{conductance} claim that measures calculating influence, such as influence-directed explanations, lead to non-intuitive results in some cases \footnote{Although this paper was released a few months after our paper on arXiv, we regard it as independent, concurrent work based on conversations with the authors.}. 
This argument is predicated on the a priori insistence on an axiom called \emph{completeness}, which states that the sum of the influences must equal the change in output relative to the baseline. We find this position difficult to get behind. First, it is unclear to us why this axiom should be demanded for influence measures given the nuanced difference between attribution and influence described above. Second, even for attribution measures (power indices) from co-operative game theory, the completeness axiom does not always hold. Straffin provides an analysis of two different power indices---one of which satisfies completeness and the other doesn't---proving that the degree of statistical independence between the inputs determines which index is appropriate for use (see \cite{roth1988shapley}, Chapter $5$).

Another challenge with applying Integrated Gradients~\cite{integratedGrads} is that it may not respect distributional faithfulness since the axioms used to arrive at that importance measure does not enforce such a constraint. Kindermans et al.~\cite{unreliability} argue that measures calculating attribution may give undesirable explanations when the baseline is not appropriately selected to control for trends in the dataset. 

We suspect that both attribution and influence may have complementary applications.
%Notably, with some straightforward modifications to our axioms, we can obtain an analogous unique measure for calculating attribution by multiplying the influence by the corresponding neuron's activation.
Future work may help determine for which applications attribution or influence is more effective, and for which attribution and influence complement one another.

%We first identify influential neurons, and then we
%can interpret them using any of these above techniques. This separation of
%concerns between influence and interpretation has a number of benefits. First,
%the step of indentifying influential neurons is not tied to a particular
%instance, and therefore can be used to answer a general class of queries,
%depending on a quantity and distribution of interest. S

% Prior work has looked on interpreting convnets has looked at two problems:
% given an image what part of the instance is relevant to a particluar neuron?
% What maximizes the activation of a particular neuron

% Prior work on explaining
% Prior work has looked at trying to interpret

% Identifying relevant parts of an instance
% Simonyan: Essence of a class by identifying an instance
% Identifying images that maximizes the activation of a neuron
%

\section{Future Work}
\label{sec:future-work}
We expect the distributional influence measure introduced in this paper to be applicable to a broad set of deep neural networks. One direction for future work is to couple this measure with appropriate interpretation methods to produce influence-directed explanations for other types of deep networks, such as recursive networks for text processing tasks. Another direction is to develop debugging tools for models using influence-directed explanations as a building block.

\paragraph*{Acknowledgment.} This work was developed with the support of NSF grant CNS-1704845 as well as by DARPA and Air Force Research Laboratory under agreement number FA9550-17-1-0600. The U.S. Government is authorized
to reproduce and distribute reprints for Governmental purposes not
withstanding any copyright notation thereon. The views, opinions, and/or
findings expressed are those of the author(s) and should not be interpreted as
representing the official views or policies of DARPA, the Air Force Research
Laboratory, the National Science Foundation, or the U.S. Government.

\bibliographystyle{unsrt}
\bibliography{cnn_explanations}

\end{document}